\newcommand{\citet}[1]{\citeauthor{#1}~(\citeyear{#1})}
\title{Learning Generalized Reactive Policies\\ using Deep Neural Networks}
\author{
  Edward Groshev \\
  Department of Computer Science\\
  University of California, Berkeley\\
  Berkeley, CA 94720 \\
  \texttt{eddiegroshev@berkeley.edu} \\
  \And
  Maxwell Goldstein\\
  Department of Mathematics\\
  Princeton University\\
  Princeton, NJ 08544 \\
  \texttt{mag4@princeton.edu} \\
  \And
  Aviv Tamar\\
  Department of Computer Science\\
  University of California, Berkeley\\
  Berkeley, CA 94720 \\
  \texttt{avivt@berkeley.edu} \\
  \AND
  Siddharth Srivastava\thanks{Some of the work was done while this author was at United Technologies Research Center.}\\
  School of Computing, Informatics, \\and Decision Systems Engineering\\
  Arizona State University \\
  Tempe, AZ 85281 \\
  \texttt{siddharths@asu.edu} \\
  \And
  Pieter Abbeel\\
  Department of Computer Science\\
  University of California, Berkeley\\
  Berkeley, CA 94720 \\
  \texttt{pabbeel@cs.berkeley.edu} \\
}
\newtheorem{prop}{Proposition}
\renewcommand{\paragraph}[1]{\vspace{2pt}\newline\noindent\textbf{#1}\hspace{10pt}}
\newcommand{\E}{\mathcal{E}}
\newcommand{\F}{\mathcal{F}}
\newcommand{\R}{\mathcal{R}}
\newcommand{\Ops}{\mathcal{A}}
\newcommand{\Ntrain}{N_{\text{train}}}
\newcommand{\Ntest}{N_{\text{test}}}
\newcommand{\Dimit}{D_{\text{imitation}}}
\newcommand{\Dboot}{D_{\text{bootstrap}}}
\newcommand{\Dtrain}{D_{\text{train}}}
\newcommand{\Dtest}{D_{\text{test}}}
\newcommand{\nboot}{n_{\text{bootstrap}}}
\begin{document}

\maketitle

\begin{abstract}
We present a new approach to learning for planning, where knowledge acquired while solving a given set of planning problems is used to plan faster in related, but new problem instances. We show that a deep neural network can be used to learn and represent a \emph{generalized reactive policy} (GRP) that maps a problem instance and a state to an action, and that the learned GRPs efficiently solve large classes of challenging problem instances. In contrast to prior efforts in this direction, our approach significantly reduces the dependence of learning on handcrafted domain knowledge or feature selection. Instead, the GRP is trained from scratch using a set of successful execution traces. We show that our approach can also   be used to automatically learn a heuristic function that can be used in directed search algorithms. We evaluate our approach using an extensive suite of experiments on two challenging planning problem domains and show that our approach facilitates learning complex decision making policies and powerful heuristic functions  with minimal human input.
Videos of our results are available at \url{goo.gl/Hpy4e3}.

\end{abstract}

\vspace{-1em}
\section{Introduction}

In order to help with day to day chores such as organizing a cabinet or arranging a dinner table, robots need to be able plan: to reason about the best course of action that could lead to a given objective. Unfortunately, planning is well known to be a challenging computational problem: plan-existence for deterministic, fully observable  environments is PSPACE-complete when expressed using rudimentary propositional representations~\cite{bylander_94}. Such results have inspired multiple approaches for reusing knowledge acquired while planning across multiple problem instances (in the form of triangle tables~\cite{fikes_72}, learning control knowledge for planning~\cite{yoon_08}, and constructing generalized plans that solve multiple problem instances~\cite{srivastava11_genplan,hu11_genplan} with the goal of faster plan computation on a new problem instance.

In this work, we present an approach that unifies the principles of imitation learning (IL) and generalized planning for learning a \emph{generalized reactive policy} (GRP) that predicts the action to be taken, given an observation of the planning problem instance and the current state. The GRP is represented as a deep neural network (DNN). We use an off-the-shelf planner to plan on a set of training problems, and train the DNN to learn a GRP that imitates and generalizes the behavior generated by the planner. We then evaluate the learned GRP on a set of unseen test problems from the same domain. We show that the learned GRP successfully generalizes to unseen problem instances including those with larger state spaces than were available in the training set. This allows our approach to be used in end-to-end systems that learn representations as well as executable behavior purely from observations of successful executions in similar problems. 

We also show that our approach can generate representation-independent heuristic functions for a given domain, to be used in arbitrary directed search algorithms such as A$^*$~\cite{hart68_astar}. Our approach can be used in this fashion when stronger guarantees of completeness and classical notions of ``explainability'' are desired. Furthermore, in a process that we call ``leapfrogging", such heuristic functions can be used in tandem with directed search algorithms to generate training data for much larger problem instances, which in turn can be used for training more general GRPs. This process can be repeated, leading to GRPs that solve larger and more difficult problem instances with iteration.

While recent work on DNNs has illustrated their utility as function representations in situations where the input data can be expressed in an image-based representation, we show that DNNs can also be effective for learning and representing GRPs in a broader class of problems where the input is expressed using a graph data structure. For the purpose of this paper, we restrict our attention to deterministic, fully observable planning problems. We evaluate our approach on two planning domains that feature different forms of input representations. The first domain is Sokoban (see Figure \ref{fig:sokoban}). This domain represents problems where the execution of a plan can be accurately expressed as a sequence of images. This category captures a number of problems of interest in household robotics including setting the dinner table. This problem has been described as the most challenging problem in the literature on learning for planning \cite{fern2011first}.

Our second test domain is the traveling salesperson problem (TSP), which represents a category of problems where execution is \emph{not} efficiently describable through a sequence of images. This problem is challenging for classical planners as valid solutions need to satisfy a plan-wide property (namely a Hamiltonian cycle, which does not revisit any nodes). Our experiments with the TSP show that using graph convolutions~\cite{dai2017learning} DNNs can be used  effectively as function representations for GRPs in problems where the grounded planning domain is expressed as a graph data structure.  

Our experiments reveal that several architectural components are required to learn GRPs in the form of DNNs:
    (1) A \emph{deep} network.
    (2) Structuring the network to receive as input pairs of current state and goal observations. This allows us to `bootstrap' the data, by training with \emph{all pairs} of states in a demonstration trajectory.
    (3) Predicting plan length as an auxiliary training signal can improve IL performance. In addition, the plan length can be effectively exploited as a heuristic by standard planners.

We believe that these observations are general, and will hold for many domains. For the particular case of Sokoban, using these insights, we were able to demonstrate a 97\% success rate in one object domains, and an 87\% success rate in two object domains. In Figure \ref{fig:sokoban} we show an example test domain, and a non-trivial solution produced by our learned DNN.

\begin{figure*}[h]
\centering
\includegraphics[width=\textwidth]{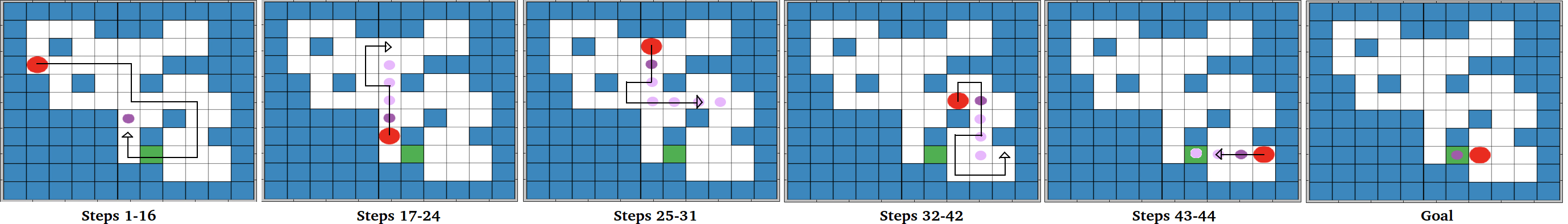}
\caption{The Sokoban domain (best viewed in color). In Sokoban the agent (red dot) must push around movable objects (purple dots) between unmovable obstacles (blue squares) to a goal position (green square). In this figure we show a challenging Sokoban instance with one object. From left to right, we plot several steps in the shortest plan for this task: arrows represent the agent's path, and light purple dots show the resulting object movement. This 44 step trajectory was produced by our learned DNN policy. Note that it demonstrates reasoning about dead ends that may happen many steps after the initial state.}\label{fig:sokoban}
\vspace{-1em}
\end{figure*}

\subsection{Related Work}
The interface of planning and learning~\cite{fern2011first} has been investigated extensively in the past. The works of  \citet{KHARDON1999125}, \citet{martin2000learning}, and \citet{yoon2002inductive} learn policies represented as decision lists on the logical problem representation, which must be hand specified. \citet{abel_icaps2015} learn action priors to prune the action space during planning. On the other hand, the literature on generalized planning~\cite{srivastava11_genplan,hu11_genplan} has focused on computing iterative generalized plans that solve broad classes of problem instances, with strong formal guarantees of correctness. In the reinforcement learning literature, \citet{konidaris_icml2006}, \citet{konidaris_jmlr2012}, and \citet{rosman_icdl2012} learn a shaping function and action priors respectively, to guides reinforcement learning on larger domains. \citet{torrey_ecml2006,torrey_cilp2007} approach skill transfer via inductive logic programming, where skills are manually crafted. While all of these strive to reuse knowledge obtained during planning, the selection of a good \emph{representation} for expressing the data as well as the learned functions or generalized plans is handcrafted. Feature sets and domain descriptions in these approaches are specified by experts using formal languages such as PDDL~\cite{fox_03}. Similarly, approaches such as case-based planning~\cite{spalzzi2001survey}, approaches for extracting macro actions~\cite{fikes_72,scala2015deordering} and for explanation based plan generalization~\cite{shavlik1989acquiring,kambhampati1994unified} rely on curated vocabularies and domain knowledge for representing the appropriate concepts necessary for efficient generalization of observations and the instantiation of learned knowledge. Our approach requires as input only a set of successful plans and their executions---our neural network architecture is able to learn a reactive policy that predicts the best action to execute based on the current state of the environment without any additional representational expressions. The current state is expressed either as an image (Sokoban) or as an instance of the graph data structure (TSP).

Neural networks have previously been used for learning heuristic functions~\cite{ernandes2004likely}. Recently, deep convolutional neural networks (DNNs) have been used to automatically extract expressive features from data, leading to state-of-the-art learning results in image classification \cite{krizhevsky2012imagenet}, natural language processing \cite{sutskever2014sequence}, and control \cite{mnih2015human}, among other domains. The phenomenal success of DNNs for across various disciplines motivates us to investigate whether DNNs can learn useful representations in the learning for planning setting as well. Indeed, one of the contributions of our work is a general convolutional DNN architecture that is suitable for learning to plan.

Imitation learning has been previously used with DNNs to learn policies for tasks that involve short horizon reasoning such as path following and obstacle avoidance \cite{pomerleau1989alvinn,ross2011reduction,tamar2016value,pfeiffer2016perception}, focused robot skills \cite{mulling2013learning,nair2017combining}, and recently block stacking \cite{duan2017one}. From a planning perspective, the Sokoban domain considered here is considerably more challenging than block stacking or navigation between obstacles. In value iteration networks \cite{tamar2016value}, a value iteration planning computation was embedded within the network structure, and demonstrated successful learning on 2D gridworld navigation. Due to the curse of dimensionality, it is not clear how to extend that work to planning domains with much larger state spaces, such as the Sokoban domain considered here. 
Concurrently with our work, \citet{weber2017imagination} proposed a DNN architecture that combines model based planning with model free components for reinforcement learning, and demonstrated results on the Sokoban domain. In comparison, our IL approach requires significantly less training instances of the planning problem (over 3 orders of magnitude) to achieve similar performance in Sokoban.

The `one-shot' techniques \cite{duan2017one}, however, are complimentary to this work. The impressive Alpha-Go-Zero~\cite{silver2017mastering} program learned a DNN policy for Go using reinforcement learning and self play. Key to its success is the natural curriculum in self play, which allows reinforcement learning to gradually explore more complicated strategies. A similar self-play strategy was essential for Tesauro's earlier Backgammon agent~\cite{tesauro1995temporal}. For the goal-directed planning problems we consider here, it is not clear how to develop such a curriculum strategy, although our leapfrogging idea takes a step in that direction.
Extending our work to reinforcement learning is a direction for future research.

Our approach thus offers two major advantages over prior efforts: (1) in situations where successful plan executions can be observed, e.g. by observing humans solving problems, our approach reduces the effort required in designing domain representations;  (2) in situations where guarantees of success are required, and domain representations are available, our approach provides an avenue for automatically generating a representation-independent heuristic function, which can be used with arbitrary guided search algorithms.

\section{Formal Framework}

We assume the reader is familiar with the formalization of deterministic, fully observable planning domains and planning problems in languages such as PDDL~\cite{fox_03,helmert09_pddl_grounding} and present the most relevant concepts here. A planning problem domain can be defined as a tuple $K=\langle \R, \Ops \rangle$, where $\R$ is  a set of binary \emph{relations}; and $\Ops$ is a set of \emph{parameterized actions}. Each action in $\Ops$ is defined by a set of preconditions categorizing the states on which it can be applied, and the set of instantiated relations that will changed to true or false as a result of executing that action.  A planning problem instance associated with a planning domain can be defined as $\Pi=\langle \E, s_0, G\rangle$, where $\E$ is a set of entities, $s_0$ is an initial state and $G$ is a set of goal conditions. Relations in $\R$ instantiated with  entities from $\E$ define the set of \emph{grounded fluents}, $\F$. Similarly, actions in $\Ops$ instantiated with appropriately entities in $\E$ define the set of \emph{grounded actions}, denoted as $\Ops[\E]$.  The initial state, $s_0$, for a given planning problem is a complete truth valuation of fluents in $\F$; the goal, $G$, is a truth valuation of a subset of the grounded fluents in $\F$. 

As an example, the discrete move action could be represented as follows: 
$$
{\emph{Move}}\text{(loc1, loc2)}:\begin{cases}
{\emph{pre}}:{RobotAt}(loc1), \\
{\emph{eff}}: \lnot{RobotAt}\text{(loc1)}, {RobotAt}\text{(loc2)} .
  \end{cases}
$$

We introduce several additional notations to the planning problem, to make the connection with imitation learning clearer. Given a planning domain and a planning problem instance, we denote by $S = 2^\F$ the state space of the planning problem. A state $s \in S$ corresponds to the values of each fluent in $\F$.  The task in planning is to find a sequence of grounded actions, $a_0, \ldots, a_n$ -- the so called \emph{plan} -- such that $a_n(\ldots(a_0(s_0))\ldots) \models G$. 

In Sokoban,  the domain represents the legal movement actions and the notion of movement on a bounded grid, a problem instance represents the exact grid layout (denoting which cell-entities are blocked), the starting locations of the objects and the agent, and the goal locations of the objects. 

We denote by $o(\Pi, s)$ the \emph{observation} for a problem  instance $\Pi$ when the state is $s$. For example, $o$ can be an image of the current game state (Figure \ref{fig:sokoban}) for Sokoban. We let $\tau = \left\{s_0, o_0, a_0, s_1, \dots, s_g, o_g\right\}$ denote the state-observation-action trajectory implied by the plan. The plan length is the number of states in $\tau$. 

Our objective is to learn a generalized behavior representation that efficiently solves multiple problem instances for a domain. More precisely, given a domain $K$, and a problem instance $\Pi$, let $\mathcal{O}_{K,\Pi}$ be the set of possible observations of states from $\Pi$. Given a planning problem domain $K =\langle \R, \Ops \rangle$ we define a \emph{generalized reactive policy (GRP)} as a function mapping observations of problem instances and states to actions: $GRP_K: \cup_\Pi\{\mathcal{O}_{K,\Pi}\} \rightarrow \cup_\Pi\{\Ops[\E_\Pi]\}$, where $\E_\Pi$ is the set of entities defined by the problem $\Pi$ and the unions range over all possible problem instances associated with $K$. Further, $GRP_K$ is constrained so that the observations from every problem instance are mapped to the grounded actions for that problem instance ($\forall \Pi \quad GRP_K(\mathcal{O}_{K, \Pi})\subseteq \Ops[\E_\Pi]$).
This effectively generalizes the concept of a policy to functions that can map states from multiple problem instances of a domain to action spaces that are legal within those instances.
\paragraph{Imitation Learning}
In imitation learning (IL), demonstrations of an expert solving a problem are given in the form of observation-action trajectories $\Dimit=\left\{o_0, a_0, o_1, \dots, o_T, a_T\right\}$. The goal is to find a policy -- a mapping from observation to actions $a = \mu(o)$, which imitates the expert. A straightforward IL approach is \emph{behavioral cloning} \cite{pomerleau1989alvinn}, in which  supervised learning is used to learn $\mu$ from the data.

\section{Learning Generalized Reactive Policies}
We assume we are given a set $\Dtrain$ of $\Ntrain$ problem instances $\left\{ \Pi_1,\dots,\Pi_{\Ntrain} \right\}$, which will be used for learning a GRP, and a set $\Dtest$ of $\Ntest$ problem instances that will be used for evaluating the learned model. We also assume that the training and test problem instances are similar in some sense, so that relevant knowledge can be extracted from the training set to improve performance on the test set. Concretely, both training and test  instances come from the same distribution. 

Our approach consists of two stages: a data generation stage and a policy training stage. 
\paragraph{Data generation} 
We generate a random set of problem instances $\Dtrain$. For each $\Pi\in \Dtrain$, we run an off-the-shelf planner to generate a plan and corresponding trajectory $\tau$, and then add the observations and actions in $\tau$ to $\Dimit$. In our experiments we used the Fast-Forward (FF) planner~\cite{FF}, though any other PDDL planner can be used instead.
\paragraph{Policy training}
Given the generated data $\Dimit$, we use IL to learn a GRP $\mu$. The learned policy $\mu$ maps an observation to action, and therefore can be readily deployed to any test problem in $\Dtest$. %

One may wonder why such a naive approach would even learn to produce the complex decision making ability that is required to solve unseen instances in $\Dtest$. Indeed, as we show in our experiments, naive behavioral cloning with standard shallow neural networks fails on this task. One of the contributions of this work is the investigation of DNN representations that make this simple approach succeed.

\subsection{Data Bootstrapping}
\label{ss:bootstrap}

In the IL literature  (e.g., \cite{pomerleau1989alvinn}), the policy is typically structured as a mapping from the observation of a state to an action. However, GRPs need to consider the problem instance while generating an action to be executed since different problem instances may have different goals. Although this seems to require more data, we present an approach for ``data bootstrapping'' that mitigates the data requirements.

Recall that our training data $\Dimit$ consists of $\Ntrain$ trajectories composed of observation-action pairs. This means that the number of training samples for a policy mapping state-observations to actions is equal to the number of observation-action pairs in the training data. However, since GRPs use the goal condition in their inputs (captured by a problem instance),\emph{any pair} of observations from successive states ($o(\Pi, s_i), o(\Pi, s_j)$) and the intermediate trajectory in an execution in $\Dtrain$ can be used as a sample for training the policy by setting $s_j$ as a goal condition for the intermediate trajectory. Our reasoning for this data bootstrapping technique is based on the following fact:

\begin{prop}
\label{prop:bootstrap}
For a planning problem $\Pi$ with initial state $s_0$ and goal state $s_g$, let $\tau_{opt} = \left\{ s_0, s_1, \dots, s_g\right\}$ denote the shortest plan from $s_0$ to $s_g$. Let $\mu_{opt}(s)$ denote an optimal policy for $\Pi$ in the sense that executing it from $s_0$ generates the shortest path $\tau_{opt}$ to $s_g$. Then, $\mu_{opt}$ is also optimal for a problem $\Pi$ with the initial and goal states replaced with any two states $s_i, s_j\in \tau_{opt}$ such that $i<j$.
\end{prop}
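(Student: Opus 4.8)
The plan is to exploit the classical optimal-substructure property of shortest paths: a contiguous sub-path of a shortest path is itself a shortest path between its endpoints. I would organize the argument into three ingredients, carried out in order.

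First, I would argue that executing $\mu_{opt}$ from any state $s_i$ on $\tau_{opt}$ reproduces the suffix $s_i, s_{i+1}, \dots, s_g$ of $\tau_{opt}$. This rests on two facts: (i) the domain is deterministic and $\mu_{opt}$ is a deterministic, history-independent map from states to actions, so the trajectory it generates is completely determined by its starting state; and (ii) the states $s_0,\dots,s_g$ on a shortest plan are pairwise distinct, since a repeated state would let us excise a cycle and obtain a strictly shorter plan, contradicting optimality of $\tau_{opt}$. Because the run of $\mu_{opt}$ from $s_0$ passes through $s_i$ and thereafter is determined, it must coincide with the continuation $s_{i+1}, s_{i+2}, \dots$ Truncating this run at the first time it reaches $s_j$ — which, by distinctness, is exactly after $j-i$ transitions — shows that $\mu_{opt}$, started at $s_i$, drives the system along $s_i, s_{i+1}, \dots, s_j$ and halts at the goal $s_j$.

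Second, I would show that this path $s_i,\dots,s_j$, of $j-i$ transitions, is in fact a shortest plan from $s_i$ to $s_j$. Suppose for contradiction there were a shorter plan $s_i = t_0, t_1, \dots, t_k = s_j$ with $k < j-i$. Since the precondition and effect of a grounded action at a state depend only on that state, concatenating the valid sub-paths $s_0,\dots,s_i$, then $t_0,\dots,t_k$, then $s_j, s_{j+1},\dots,s_g$ yields a valid plan from $s_0$ to $s_g$ of length $i + k + (g-j) < i + (j-i) + (g-j) = g$, contradicting the optimality of $\tau_{opt}$. Combining the two ingredients, $\mu_{opt}$ started at $s_i$ generates a plan that reaches $s_j$ and has minimal length among all plans from $s_i$ to $s_j$, which is precisely the definition of optimality given in the statement, applied to the problem obtained from $\Pi$ by replacing its initial state with $s_i$ and its goal with $s_j$.

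The main obstacle — more a matter of care than of depth — is the cut-and-paste step: one must verify that splicing a sub-plan into $\tau_{opt}$ at a shared endpoint state indeed produces an executable plan. This is exactly where the structure of the PDDL-style action model is used: since applicability and effects are functions of the current state alone and transitions are deterministic, validity is preserved under concatenation at a common state and lengths simply add. A secondary point to keep straight is plan length versus number of states (the statement counts states, so the path through $s_i,\dots,s_j$ has $j-i+1$ states and $j-i$ transitions); measuring everything in transitions makes the inequality above transparent and the off-by-one harmless.
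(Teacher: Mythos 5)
The paper does not actually supply a proof of Proposition~\ref{prop:bootstrap}; it is asserted as a standard fact (the principle behind triangle tables) and left unproved. Your argument is therefore filling a gap the authors chose not to fill, and it does so correctly: the two ingredients you isolate --- (i) that a deterministic, state-only policy restarted at $s_i$ must reproduce the suffix of $\tau_{opt}$ (using distinctness of states along a shortest plan so the trajectory through $s_i$ is unambiguous and first reaches $s_j$ after exactly $j-i$ transitions), and (ii) the cut-and-paste optimal-substructure argument showing the segment $s_i,\dots,s_j$ is itself shortest --- are exactly what is needed, and the point you flag about splicing being licensed because applicability and effects depend only on the current state is the right thing to be careful about in the PDDL setting. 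One minor caveat worth a sentence if this were written out in full: the paper's goals are in general partial truth assignments rather than full states, and the policy is only claimed optimal in the sense of the trajectory it generates from the new initial state, so your reading of ``replace the goal with $s_j$'' as ``stop the first time $s_j$ is reached'' is the intended one; also note the authors' own footnote that their planner is only satisficing, so in their actual use the proposition is invoked in a weakened (validity-preserving rather than optimality-preserving) form, which your second ingredient is not needed for but your first ingredient still establishes.
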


Proposition \ref{prop:bootstrap} underlies classical planning methods such as triangle tables~\cite{fikes_72}. Here, we exploit it to design our DNN to take as input \emph{both} the \emph{current observation} and a \emph{goal observation}. For a given trajectory of length $T$, the bootstrap can potentially increase the number of training samples from $T$ to $(T-1)^2/2$.
In practice, for each trajectory $\tau \in \Dimit$, we uniformly sample $\nboot$ pairs of observations from $\tau$. In each pair, the first observation is treated as the current observation, while the last observation is treated as the goal observation.\footnote{In our experiments, we used the FF planner, which does not necessarily produce shortest plans. However, Proposition \ref{prop:bootstrap} can be extended to satisficing plans.} This results in $\nboot+T$ training samples for each trajectory $\tau$, which are added to a bootstrap training set $\Dboot$ to be used instead of $\Dimit$ for training the policy.\footnote{Note that for the Sokoban domain, goal observations in the test set (i.e., real goals) do not contain the robot position, while the goal observations in the bootstrap training set include the robot position. However, this inconsistency had no effect in practice, which we verified by explicitly removing the robot from the observation. }

\subsection{Network Structure}
\label{ss:network_structure}

We propose a general structure for a convolutional network that can learn a GRP. 

Our network is depicted in Figure \ref{fig:network}. The current state and goal state observations are passed through several layers of convolution which are shared between the action prediction network and the plan length prediction network. There are also skip connections from the input layer to to every convolution layer.

The shared representation is motivated by the fact that both the actions and the overall plan length are integral parts of a plan. Having knowledge of the actions makes it easy to determine plan length and vice versa, knowledge about the plan length can act as a template for determining the actions. The skip connections are motivated by the fact that several planning algorithms can be seen as applying a repeated computation, based on the planning domain, to a latent variable. For example, greedy search expands the current node based on the possible next states, which are encoded in the domain; value iteration is a repeated modification of the value given the reward and state transitions, which are also encoded in the domain. Since the network receives no other knowledge about the domain, other than what's present in the observation, we hypothesize that feeding the observation to every conv-net layer can facilitate the learning of similar planning computations. We note that in value iteration networks~\cite{tamar2016value}, similar skip connections were used in an explicit neural network implementation of value iteration.

For planning in graph domains, we propose to use graph convolutions, similar to the work of~\cite{dai2017learning}. The graph convolution can be seen as a generalization of an image convolution, where an image is simply a grid graph. Each node in the graph is represented by a feature vector, and linear operations are performed between a node and its neighbors, followed by a nonlinear activation. A detailed description is provided in the supplementary material.
For the TSP problem with $n$ nodes, we map a partial Hamiltonian path $P$ of the graph to a feature representation as follows. 
For each node, the features are represented as a $3$-dimensional binary vector. The first element is 1 if the node has been visited in $P$, the second element is 1 if it is the current location of the agent, and the third element is 1 if the node is the terminal node. For a Hamiltonian cycle the terminal node is the start node. The state is then represented as a collection of feature vectors, one for each node. %We also encode the features of the goal state, defined in the same way. 
In the TSP every Hamiltonian cycle is of length $n$, so predicting the plan length in this case is trivial, as we encode the number of visited cities in the feature matrix. Therefore, we omit the plan-length prediction part of the network.

\subsection{Generalization to Different Problem Sizes}
A primary challenge in learning for planning is finding representations that can generalize across different problem sizes. For example, we expect that a good policy for Sokoban should work well on the instances it was trained on, $9\times9$ domains for example, as well as on larger instances, such as $12\times12$ domains. A convolution-based architecture naturally addresses this challenge.

However, while the convolution layers can be applied to any image/graph size, the number of inputs to the fully connected layer is strictly tied to the problem size. This means that the network architecture described above is fixed to a particular grid dimension. To remove this dependency, we employ a trick used in fully convolutional networks~\cite{long2015fully}, and keep only a $k\times k$ window of the last convolution layer, centered around the current agent position. This modification makes our DNN applicable to any grid dimension. Note that since the window is applied \emph{after} the convolution layers, the receptive field can be much larger than $k\times k$. In particular, a value of $k=1$ worked well in our experiments. For the graph architectures, a similar trick is applied, where the decision at a particular node is a function of the convolution result of its neighbors, and the same convolution weights are used across different graph sizes.

\begin{figure*}[h]
\centering
\includegraphics[width=0.95\textwidth]{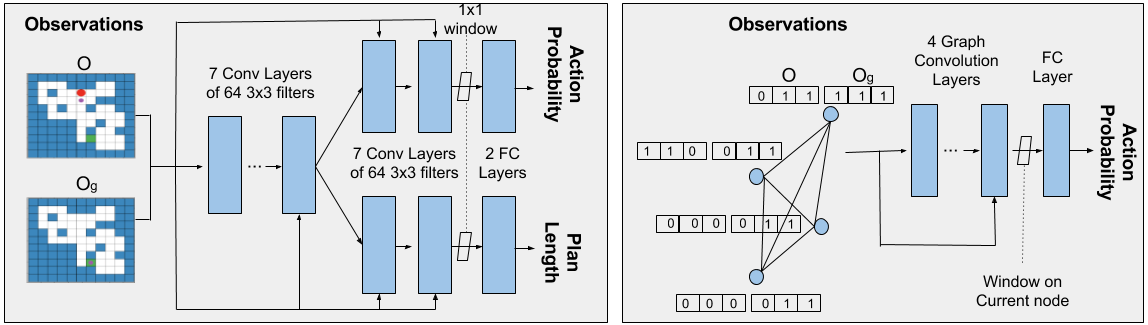}
\vspace{-0.5em}
\caption{Network architecture. The architecture on the left is used for Sokoban, while the one on the right is used for the TSP. A pair of current and goal observations are passed in to a shared conv-net. This shared representation is input to an action prediction conv-net and a plan length prediction conv-net. Skip connections from the input observations to all conv-layers are added. For the TSP network, we omitted the plan length prediction, as the features directly encode the number of nodes visited, making the prediction trivial. All activation functions are ReLU's and the final one is a SoftMax (multi-label classification used for action selection). In both architectures, after the last convolution layer, we apply a $k\times k$ window around the agents location to ensure a constant size feature vector is passed to the fully connected layers. This effectively decouples the architecture from the problem size and allows the receptive field to be greater than the $k\times k$ window.}\label{fig:network}
\vspace{-1em}
\end{figure*}

\section{Experiments}
Here we report our experiments\footnote{Sokoban datasets available at \url{https://github.com/edsterg/learning_grps} and TSP code available at \url{https://github.com/maxgold/generalized-gcn}} on learning for planning with DNNs. Our focus is on the following questions:
\begin{enumerate}
    \item What makes a good DNN architecture for learning a GRP?
    \item Can a useful planning heuristic be extracted from the GRP?
\end{enumerate}

The first question aims to show that recent developments in the representation learning community, such as deep convolutional architectures, can be beneficial for planning. The second question has immediate practical value -- a good heuristic can decrease planning costs. However, it also investigates a deeper premise. If a useful heuristic can indeed be extracted from the GRP, it means that the GRP has learned some underlying structure in the problem. In the domains we consider, such structure is hard to encode manually, suggesting that the data-driven DNN approach can be promising.

To investigate these questions, we selected two test domains representative of very different classes of planning problems. We used the \emph{Sokoban} domain to represent problems where plan execution can be captured as a set of images, and the goal takes the form of achieving a state property (objects at their target locations). We used the \emph{traveling salesperson problem} as an exemplar for problems where plan execution is not easy to capture as a set of images and the goal features a temporal property.
\paragraph{Sokoban} For Sokoban, we consider two difficulty levels: moving a single object as described in Figure \ref{fig:sokoban}, and a harder task of moving two objects. We generated training data using a Sokoban random level generator.\footnote{The Sokoban data-set from the learning for planning competition contains only 60 training domains, which is not enough to train a DNN. Our generator works as follows: we assume the room dimensions are a multiple of 3 and partition the grid into 3x3 blocks. Each block is filled with a randomly selected and randomly rotated pattern from a predefined set of 17 different patterns. To make sure the generated levels are not too easy and not impossible, we discard the ones containing open areas greater than 3x4 and discard the ones with disconnected floor tiles. For more details we refer the reader to Taylor et al.~\cite{Taylor2011SokobanGen}.}

For imitation learning, we represent the policy with the DNNs as described in Network Structure section 
and optimize using Adam~\cite{kingma2014adam}. When training with data bootstrapping, we selected $\nboot = T$ for generating $\Dboot$. Unless stated otherwise, the training set used in all Sokoban experiments was comprised of 45k observation-action trajectories (9k distinct obstacle configurations with 5 random start/goal locations per configuration).

To evaluate policy performance on the Sokoban domain we use execution success rate. Starting from the initial state, we execute the learned policy deterministically and track whether or not the goal state is reached.
We evaluate performance both on test domains of the same size the GRPs were trained on, $9\times 9$ grids, and also on larger problems. We explicitly verified that \emph{none of the test domains appeared in the training set}.

Videos of executions of our learned GRPs for Sokoban are available at \url{goo.gl/Hpy4e3}.
\paragraph{TSP} For TSP, we consider two different graph distributions. The first is the space of complete graphs with edge weights sampled uniformly in $[0,1]$. The second, which we term \emph{chord graphs}, is generated by first creating an $n$-node graph in the form of a cycle, and then adding $2n$ undirected chords between randomly chosen pairs of nodes, with a uniformly sampled weight in $[0,1]$. The resulting graphs are guaranteed to contain Hamiltonian cycles. However, in contrast to the complete graphs, finding such a Hamiltonian cycle is not trivial. Our results for the chord graphs are similar to the complete graphs, and for space constraints, we present them in the supplementary material. Training data was generated using the TSP solver in Google Optimization Tools\footnote{\url{https://developers.google.com/optimization}}.

As before, we train the DNN using Adam. We found it sufficient to use only 1k observation-action trajectories for our TSP domain. The metric used is average relative cost\footnote{For the complete graphs, all policies always succeeded in finding a Hamiltonian cycle. For the chord graphs, we report success rates in the supplementary material.}, defined as the ratio between the cycle cost of the learned policy and the Google solver, averaged over all initial nodes in each test domain. We also compare the DNN policy against a greedy policy which always picks the lowest-cost edge leading to an unvisited node. 

As in the Sokoban domain, we evaluate performance on test domains with graphs of the same size as the training set, 4 node graphs, and on larger graphs with up-to 11 nodes.

\subsection{Evaluation of Learned GRPs}
Here we evaluate performance of the learned GRPs on previously unseen test problems. Our results suggest that the GRP can learn a well-performing planning-like policy for challenging problems. In the Sokoban domain, on $9\times 9$ grids, the learned GRP in the best performing architecture (14 layers, with bootstrapping and a shared representation) can solve one-object Sokoban with 97\% success rate, and two-object Sokoban with 87\% success rate. Figure \ref{fig:sokoban} shows a trajectory that the policy predicted in a challenging one-object domain from the test set. Two-object trajectories are difficult to illustrate using images; we provide a video demonstration at \url{goo.gl/Hpy4e3}. We observed that the GRP effectively learned to select actions that avoid dead ends far in the future, as Figure~\ref{fig:sokoban} demonstrates. The most common failure mode is due to cycles in the policy, and is a consequence of using a deterministic policy. Further analysis of failure modes is given in the supplementary material. The learned GRP can thus be used to solve new planning problem instances with a high chance of success. In domains where simulators are available, a planner can be used as a fallback if the policy fails in simulation.

For TSP, Figure \ref{fig:tsp_results} shows the performance 
of the GRP policy on complete graphs of sizes $4-11$, when trained on graphs of the same size (respectively). For both the GRP and the greedy policy, the cost increases approximately linearly with the graph size. For the greedy policy, the rate of cost increase is roughly twice the rate for the GRP, showing that the GRP learned to perform some type of lookahead planning.

\subsection{Investigation of Network Structure}
We performed ablation experiments to tease out the important ingredients for a successful GRP. Our results suggest that deeper networks improve performance.

In Figure~\ref{fig:experiment1} we plot execution success rate on two-object Sokoban, for different network depths, and with or without skip connections. The results show that deeper networks perform better, with skip connections resulting in a consistent advantage. In the supplementary material we show that a deep network significantly outperformed a shallow network with the same number of parameters, further establishing this claim.
The performance levels off after 14 layers. We attribute this to the general difficulty of training deep DNNs due to gradient propagation, as evident in the failure of training the 14 layer architecture without skip connections.

\begin{figure*}[h]
\vspace{-1em}
  \centering
    \noindent
\hfill
  \begin{subfigure}[b]{0.31\linewidth}
    \includegraphics[width=\textwidth]{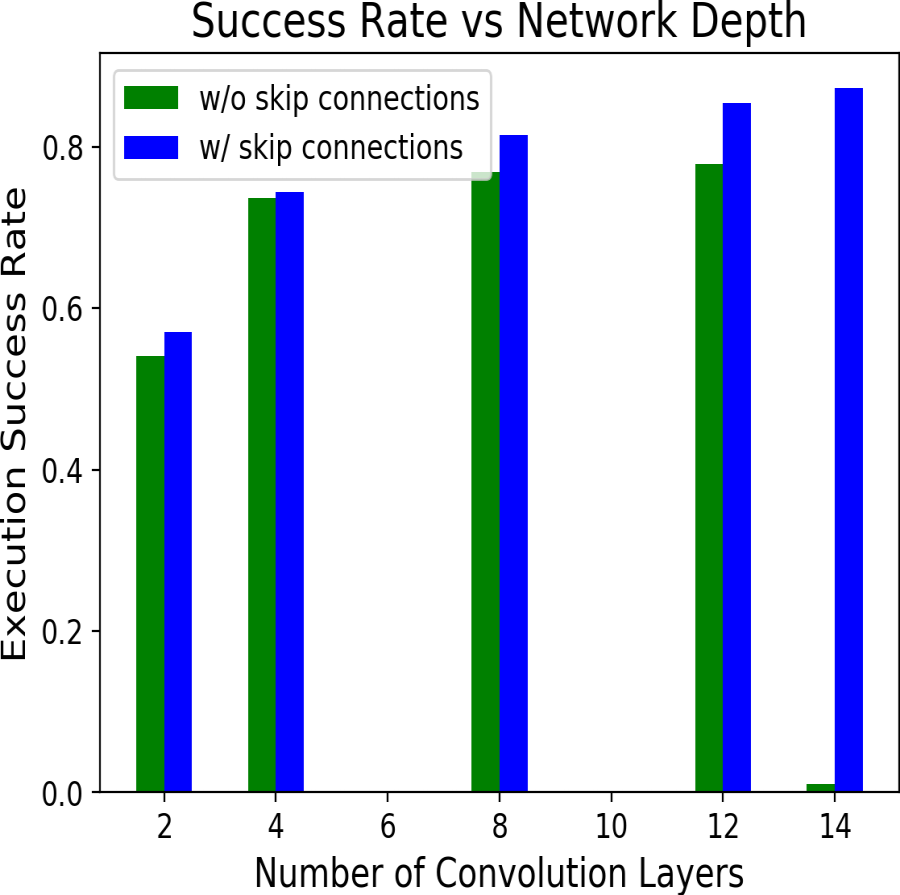}
    \caption{}
    \label{fig:experiment1}
  \end{subfigure}%\hspace{0.1\textwidth}
\hfill
  \begin{subfigure}[b]{0.32\linewidth}
    \includegraphics[width=\textwidth]{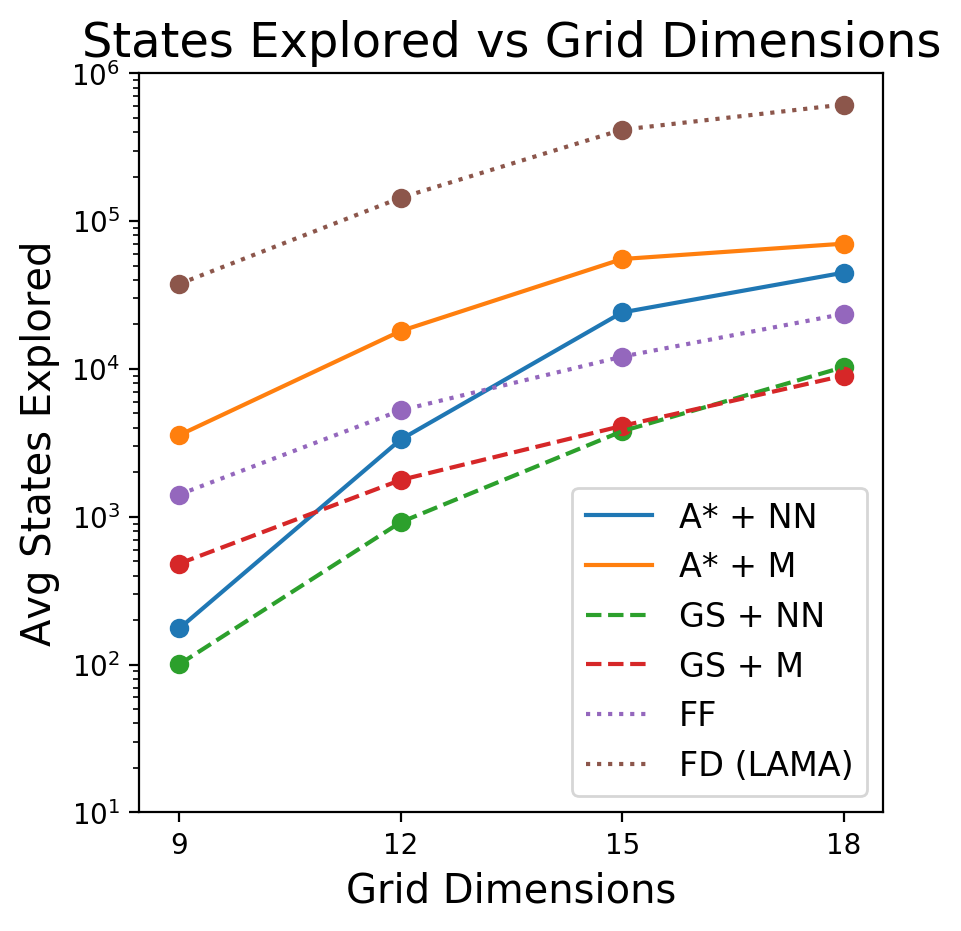}
    \caption{}
    \label{fig:heuristic}
  \end{subfigure}%\hspace{0.1\textwidth}
  \hfill
  \begin{subfigure}[b]{0.3\linewidth}
    \includegraphics[width=\textwidth]{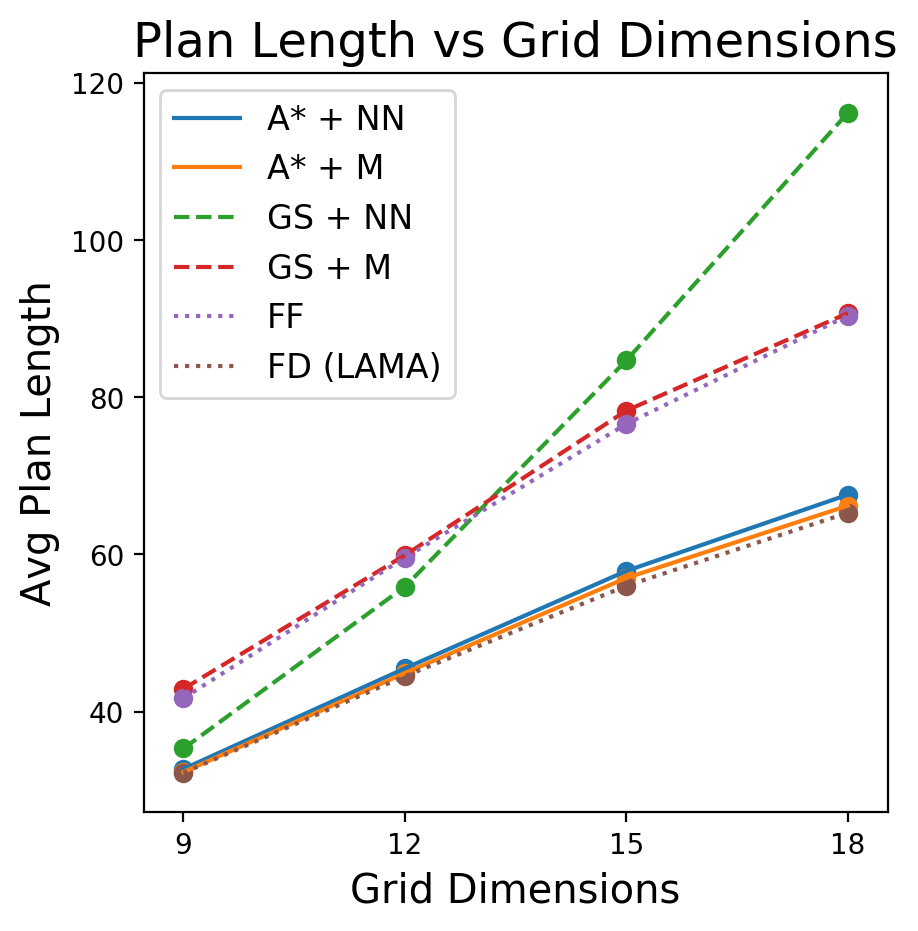}
    \caption{}
  \end{subfigure}
\vspace{-0.5em}
  \caption{Sokoban results. (a) Investigating DNN depth and skip connections. We plot the success rate for deterministic execution in two-object Sokoban. Deeper networks show improved success rates and skip connections improve performance consistently. We were unable to successfully train a 14 layer deep network without skip connections.
  (b,c) Performance of learned heuristic. The GRP was trained only on 9x9 instances, and evaluated (as a heuristic, see text for more details) on larger instances. (b) shows number of states explored (i.e., planning speed) and (c) shows plan length (i.e., planning quality). A* with the learned heuristic produced nearly optimal plans with an order of magnitude reduction in the number of states explored. All the differences in (b) are significant according to a Wilcoxon signed-rank test with significance $0.1\%$ and $p<1\times 10^{-6}$.}
  \label{fig:experiments}
\end{figure*}

\begin{figure*}[h!]
  \centering
    \hfill
    \begin{subfigure}[b]{0.32\linewidth}
    \includegraphics[width=\textwidth]{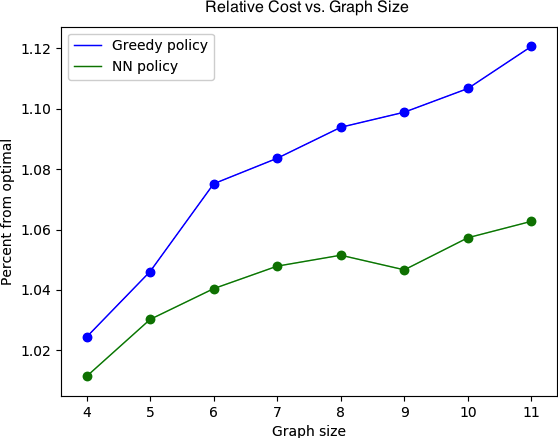}
    \caption{}
    \label{fig:tsp_results}
  \end{subfigure}\hfill
  \begin{subfigure}[b]{0.32\linewidth}
    \includegraphics[width=\textwidth]{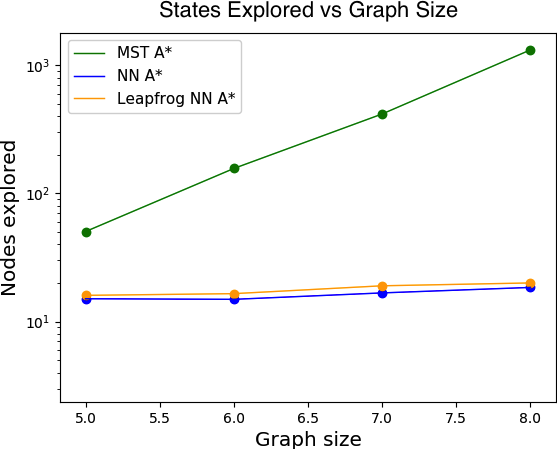}
    \caption{}
  \end{subfigure}\hfill
  \begin{subfigure}[b]{0.32\linewidth}
    \includegraphics[width=\textwidth]{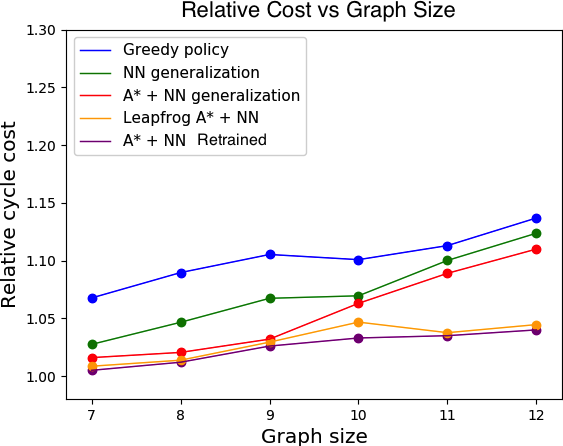}
    \caption{}
  \end{subfigure}
  \caption{TSP results. (a) Performance (average relative cost; see text for details) for GRPs trained and tested on problems of sizes $4-11$, respectively. We compare the GRP with a greedy policy. (b,c) Performance of learned heuristic. The GRP was trained on 4-node graphs, and evaluated (as a heuristic, see text for more details) on larger instances. (b) shows number of states explored (i.e., planning speed). We compare with the minimum spanning tree heuristic, which is admissible for TSP.  (c) shows average relative cost (i.e., planning quality) compared to plans from the Google solver. Note that up to a graph of size $9$, the performance of A$^*$ with GRP heuristic (labeled A$^*$+NN generalization) was within $5\%$ of optimal, while requiring orders of magnitude less computation than the MST heuristic.  We also present results for the leapfrogging algorithm (see text for details), and additionally compare to a baseline of retraining the GRP with optimal data for each graph size. Note that the leapfrogging results are very close to the results obtained with retraining, although optimal data was only given for the smallest graph size. This shows that the GRP heuristic can be used for generating reliable training data for domains of larger size than trained on.}
  \label{fig:tsp_execution}
  \vspace{-1em}
\end{figure*}

We also investigated the benefit of having a shared representation for both action and plan length prediction, compared to predicting each with a separate network. The ablation results are presented in Table \ref{table:bootstrap_compare}. Interestingly, the plan length prediction improves the accuracy of the action prediction.

\begin{table}[h]
\begin{center}
\resizebox{0.48 \textwidth}{!}{   %resize table
  \begin{tabular}{ c | c | c l }
        & w/ bootstrap & w/o bootstrap\\
    \hline
    Predict plan length & 2.211 & 2.481 & $\ell_1$ norm\\
    \hline
    Predict plan length  & \textbf{2.205} & 2.319 & $\ell_1$ norm\\ \cline{2-4}
    \& actions    & \textbf{0.844} & 0.818 & Succ Rate\\
    \hline
    Predict actions & 0.814 & 0.814 & Succ Rate\\
    \hline
  \end{tabular}\par
}

  \bigskip
  \vspace{-1em}
  \caption{Benefits of bootstrapping and having a shared representation. To evaluate accuracy of the plan length prediction, we measure the average $\ell_1$ loss (absolute difference). To evaluate action prediction we measure the success rate on execution. Best performance was obtained with using bootstrapping and the shared representation. For this experiment the training set contained 25k observation-action trajectories from 5k different obstacle configurations.
  }\label{table:bootstrap_compare}
  \vspace{-2em}
\end{center}
\end{table}

\subsection{GRP as a Heuristic Generator}

We now show that the learned GRPs can be used to extract \emph{representation independent heuristics} for use with arbitrary guided search algorithms. To our knowledge, there are no other approaches for computing such heuristics without using hand-curated domain vocabularies or features for learning and/or expressing them. However, to evaluate the quality of our learned heuristics, we compared them with a few well-known heuristics that are either handcrafted or computed using handcrafted representations. We found that the representation-independent GRP heuristic was competitive, and remains effective on larger problems than the GRP was trained on. 
For the Sokoban domain, the plan-length prediction can be directly used as a heuristic function. This approach can be used for  state-property based goals in problems where execution can be captured using images. For the TSP domain, we used a heuristic that is inversely proportional to the probability of selecting the next node to visit, as the number of steps required to create a complete cycle is not discriminative. Full details are given in the supplementary material.

We investigated using the GRP as a heuristic in greedy search and A$^*$ search \cite{hart68_astar}. We use two performance measures: the number of states explored during search and the length of the computed plan. The first measure corresponds to planning speed since evaluating less nodes translates to faster planning. The second measure represents plan quality. 
\paragraph{Sokoban} We compare performance in Sokoban to the Manhattan heuristic\footnote{The Manhattan heuristic is only admissible in one-object Sokoban. We tried Euclidean distance and Hamiltonian distance. However, Manhattan distance had the best trade-off between performance and computation time.} in Figure \ref{fig:heuristic}. In the same figure we evaluate generalization of the learned heuristic to larger, never before seen, instances as well as the performance of two state-of-the-art planners: LAMA~\cite{lama2011} which uses the Fast Downward (FD, \cite{helmert2006fast}) planning framework, and Fast Forward (FF, \cite{FF}) planner.\footnote{We constrained the anytime LAMA algorithm to 5 minutes per instance. For all instances we evaluated, LAMA always found the optimal solution.} The GRP was trained on $9 \times 9$ domains, and evaluated on new problem instances of similar size or larger. During training, we chose the window size $k=1$ to influence learning a problem-instance-size-invariant policy. As seen in Figure \ref{fig:heuristic} the learned GRP heuristic \emph{significantly outperforms the Manhattan heuristic} in both greedy search and A* search, on the 9x9 problems. As the size of the test problems increases, the learned heuristic shines when used in conjunction with A*, consistently expanding fewer nodes than the Manhattan heuristic. Note that even though the GRP heuristic is not guaranteed to be admissible, when used with A*, the plan quality is very close to optimal, while exploring an order of magnitude less nodes than the conventional alternatives.
\paragraph{TSP} 
We trained the GRP on 6-node complete graphs and evaluated the GRP, used either directly as a policy or as a heuristic within A$^*$, on graphs of larger size.
Figure \ref{fig:tsp_execution}(b-c) shows generalization performance of the GRP, both in terms of planning speed (number of nodes explored) and in terms of plan quality (average relative cost). We compare both to a greedy policy, and to A$^*$ with the minimum spanning tree (MST) heuristic. Note that the GRP heuristic is significantly more efficient than MST, while not losing much in terms of plan quality, especially when compared to the greedy policy.

\subsection{Leap-Frogging Algorithm}
The effective generalization of the GRP heuristic to larger problem sizes motivates a novel algorithmic idea for learning to plan on iteratively increasing problem sizes, which we term \emph{leap-frogging}. The idea is that, we can use a `general and optimal' planner, such as LAMA, to generate data for a small domain, of size $d$. We then train a GRP using this data, and use the resulting GRP heuristic in A$^*$ to \emph{quickly} solve planning problems from a larger domain $d'>d$. These solutions can then be used as new data for training another GRP on the domain size $d'$. Thus, we can iteratively apply this procedure to solve problems of larger and larger sizes, while only requiring the slow `general' planner to be applied in the smallest domain size. 

In Figure \ref{fig:tsp_execution}c we demonstrate this idea in the TSP domain. We used the solver to generate training data for a graph with 4 nodes. We then evaluate the GRP heuristic trained using leapfrogging on larger domains, and compare with a GRP heuristic that was only trained on the 4-node graph. 
Note that we significantly improve upon the standard GRP heuristic, while using the same initial optimal data obtained from the slow Google solver. We also compare with a GRP heuristic that was re-trained with optimal data for each graph size. Interestingly, this heuristic performed only slightly better than the GRP trained using leap-frogging,
showing that the generalization of the GRP heuristic is effective enough to produce reliable new training data.

\section{Conclusion}

We presented a new approach in learning for planning, based on imitation learning from  execution traces of a planner. We used deep convolutional neural networks for learning a generalized policy, and proposed several network designs that improve learning performance in this setting, and are capable of generalization across problem sizes. 
Our networks can be used to extract an effective heuristic for off-the-shelf planners, improving over standard heuristics that do not leverage learning.

Our results on the challenging Sokoban domain suggest that DNNs 
have the capability to 
extract powerful features from observations, 
and 
the potential to learn the type of `visual thinking' that makes some planning problems easy for humans but very hard for automatic planners. The leapfrogging results, suggest a new approach for planning -- when facing a large and difficult problem, first solve simpler instances of the problem and learn a DNN heuristic that aids search algorithms in solving larger instances. This heuristic can be used to generate data for training a new DNN heuristic for larger instances, and so on.  Our preliminary results suggest this approach to be promising.

There is still much to explore in employing deep networks for planning. While representations for images based on deep conv-nets have become standard, representations for other modalities such as graphs and logical expressions are an active research area~\cite{dai2017learning,kansky2017schema}. We believe that the results presented here will motivate future research in representation learning for planning.

\subsection*{Acknowledgement}
This research was supported in part by Siemens, Berkeley Deep Drive, and an ONR PECASE N000141612723.

\begin{small}
% BEGIN, fix arXiv bibliography
\makeatletter
\renewcommand\@biblabel[1]{}
\makeatother
% END, fix arXiv bibliography
\bibliographystyle{aaai}
\bibliography{references}
\end{small}

\newpage
\section{Appendix}
\subsection{Graph Convolution Network}
Consider a graph $\mathcal{G} = (V, \mathcal{E})$ with adjacency matrix $A$ where $V$ has $N$ nodes and $\mathcal{E}$ is the weighted edge set with weight matrix $W$.  Suppose that each node  $v \in V$ has a corresponding feature $x_v \in \mathbb{R}^m$ and each edge $(u,v) \in E$ corresponds to $e_{uv} \in \mathbb{R}^n$  and consider a parametric function $f_\theta : \mathbb{R}^{2m+n} \rightarrow \mathbb{R}^{\hat m}$ parameterized by $\theta \in \mathbb{R}^f$. Let $\mathcal{N}_i : V \rightarrow 2^V$ denote a function mapping a vertex to its $i$th degree neighborhood.
The propagation rule is given by the following equation
\begin{align}
H_{v} = \sigma\left(\sum_{u \in \mathcal{N}(v)} A_{uv} f_\theta(x_u, x_v, e_{uv})\right)
\end{align}
where $\sigma$ is the ReLU function.
Consider a graph $\mathcal{G}$ of size $n$, with each vertex having feature vector of size $C$ encoded in the feature matrix $X \in \mathbb{R}^{n\times C}$. In the TSP experiments, we use the propagation rule to generate $H\in \mathbb{R}^{n\times C'}$, where $C'$ is the number of features in the next layer (a.k.a. depth of the layer), and where the $ij$ entry of $H$ is given by
 \begin{align}
 H_{ij} = \sigma\left(\sum_{s \in \mathcal{N}(i)} A_{si} [x_s, x_i, W_{si}]^T \Theta_j + b_j\right)
\end{align}
Here, $W$ is the weight matrix of  $\mathcal{G}$, $A$ is the adjacency matrix, and $\Theta \in \mathbb{R}^{(2C+1)\times C'}$ is the matrix of weights that we learn and $b \in \R^{C'}$ is a learned bias vector. $\Theta_j$ is the $j$th column of $\Theta$.

In the networks we used for the TSP domain, the initial feature vector is of size $C=6$. We then applied $4$ convolution layers of size $C=26$.
We then applied a convolution of size $C=1$, corresponding to a fully connected layer. Thus, $j=1$ in $H_{ij}$ for all $i$ in the last convolution layer.

The final layer of the network is a softmax over $H_{i1}$, and we select the node $i$ with the highest score that is also connected to the current node.
\paragraph{Relation to Image Convolution} In the next proposition we show that this graph-based propagation rule can be seen as a generalization of a standard 2-D convolution, when applied to images (grid graphs). Namely, we show that there exists features for a grid graph and parameters $\Theta$ for which the above propagation rule reduces to a standard 2-D convolution. 
\begin{prop}
When $\mathcal{G}$ is a grid graph, for a particular choice of $f_\theta$ the above propagation rule reduces to the traditional convolutional network. In particular, for a filter of size $n$, choosing $f_\theta$ as a polynomial of degree $2(N-1)$ and $\theta \in \mathbb{R}^{N^2}$ works.
\end{prop}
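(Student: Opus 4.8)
The plan is to prove the proposition by an explicit construction: choose a feature encoding for the grid graph and an explicit polynomial $f_\theta$ so that the propagation rule computes, node by node, exactly the ReLU of a standard stride-$1$ discrete 2-D convolution. Since the statement only asserts the existence of a suitable $f_\theta$, a single construction suffices; there is no need to characterize all admissible $f_\theta$.

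First I would fix the combinatorial setup. Identify the vertices of $\mathcal{G}$ with their integer coordinates $v=(r_v,c_v)$, add a self-loop at every vertex (the usual trick so that the ``center tap'' of the filter is available), and take the neighborhood $\mathcal{N}(v)$ appearing in the rule to be the $n\times n$ window $\{u:|r_u-r_v|\le\tfrac{n-1}{2},\ |c_u-c_v|\le\tfrac{n-1}{2}\}$ (say $n$ odd for concreteness), i.e.\ a fixed power of the base grid graph, which is precisely the footprint of an $n\times n$ filter. Put all adjacency/edge weights equal to $1$ on this neighborhood, let the edge feature $e_{uv}$ record the offset $(r_u-r_v,\,c_u-c_v)$ (equivalently, append $(r_v,c_v)$ to $x_v$ and form the difference inside $f_\theta$), and let the remaining coordinate of $x_u$ carry the scalar input signal $z_u$ at that pixel.

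The core step is polynomial interpolation over the finite offset set. A standard $n\times n$ convolution with kernel $\{k_{a,b}:-\tfrac{n-1}{2}\le a,b\le\tfrac{n-1}{2}\}$ outputs $\sum_{u\in\mathcal{N}(v)} z_u\,k_{r_u-r_v,\,c_u-c_v}$ at $v$. By tensor-product Lagrange interpolation there is a unique polynomial $g_\theta$ of degree $\le N-1$ in each of the two offset variables with $g_\theta(a,b)=k_{a,b}$ at all $n^2$ offsets in the window; the space of such polynomials is spanned by the $N^2$ monomials $\{a^i b^j:0\le i,j\le N-1\}$, so $\theta\in\mathbb{R}^{N^2}$ and $g_\theta$ has total degree $2(N-1)$, matching the statement with $N=n$. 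Setting $f_\theta(x_u,x_v,e_{uv}):=z_u\cdot g_\theta(e_{uv})$ — a polynomial in its arguments (linear in the signal coordinate, degree $N-1$ per spatial offset variable) — the propagation rule yields $H_v=\sigma\big(\sum_{u\in\mathcal{N}(v)} z_u\,k_{r_u-r_v,c_u-c_v}\big)=\operatorname{ReLU}\big((k*z)(v)\big)$, exactly one channel of one conventional conv-layer (a bias term is already present in the $H_{ij}$ form of the rule, and multi-channel inputs/outputs follow by stacking channels into $x_u$ and running the construction coordinate-wise).

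The only real subtleties, which I would state carefully rather than treat as deep obstacles, are two pieces of bookkeeping. First, ``filter of size $n$'', ``degree $2(N-1)$'', and ``$\theta\in\mathbb{R}^{N^2}$'' are consistent precisely when ``degree $2(N-1)$'' is read as multidegree $(N-1,N-1)$ in the two offset variables, so that the coefficient vector lies in $\mathbb{R}^{N^2}$ and interpolation over the $n^2=N^2$ offsets is exact. Second, one must take $\mathcal{N}(v)$ to be the full $n\times n$ window together with a self-loop (rather than the default $4$-neighborhood), since a single application of the rule otherwise cannot reach every tap of the filter; granting that, the remainder is just Lagrange interpolation plus linearity, and the reduction to a traditional convolutional layer is immediate.
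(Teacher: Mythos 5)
Your construction is essentially the paper's own proof: both arguments build a tensor-product Lagrange-type interpolating polynomial over the $n\times n$ grid of integer offsets so that it takes the filter-tap value $\theta_{i,j}$ at offset $(i,j)$, multiply by the pixel value $p_u$, and observe that summing over the window followed by the ReLU reproduces a standard 2-D convolution layer, with the degree count $(n-1)+(n-1)=2(n-1)$ and $\theta\in\mathbb{R}^{n^2}$. If anything, you are more careful than the paper about the bookkeeping it glosses over (taking $\mathcal{N}(v)$ to be the full $n\times n$ window rather than the $4$-neighborhood, and reading ``degree $2(N-1)$'' as bidegree $(N-1,N-1)$ so the interpolation is exact), so the proposal is correct and matches the paper's route.
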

\begin{proof}
For each node $v$, consider its representation as $v = (v_x, v_y)$ where $(v_x, v_y)$ are the grid coordinates of the vertex.

Let $a := \frac{n-1}{2}$. We first transform the coordinates to center them around $v$ by transforming $u \rightarrow (u_x - v_x, u_y - v_y)$ so that $u$ lies in the set $[-a, a] \times [-a,a]$. 

We wish to design a polynomial $g$ that takes the value $\theta_{i,j}$ at location $(i,j)$. We show that it is possible to do with a degree $2(n-1)$ polynomial by construction. The polynomial $g$ is given by
\begin{align}
g(x, y) := \sum_{i=-a}^{a} \sum_{j=-a}^a  \theta_{i,j} \prod_{s=-a, s\neq i}^a (s+y) \prod_{t=-a, t\neq j}^a (t+x)
\end{align}
To see why this is correct, note that for any $(s,t) \in [-a, a] \times [-a,a]$ there is exactly one polynomial inside the summands that does not have either of the terms $(i + u_y)$ or $(j + u_x)$ appearing in its factorization. Indeed, by construction this term is the polynomial corresponding to $\theta_{i,j}$ so that $g(i,j) = C \theta_{i,j}$ for some constant $C$. 

The polynomial inside the summands is of degree $(n-1)+(n-1) = 2(n-1)$, so $g$ is of degree $2(n-1)$. Letting $p_u$ denote th pixel value at node $u$, setting 
\begin{align}
f_\theta(x_u, x_v) := p_u g(x_u - x_v)
\end{align}
completes the proof.
\end{proof}

\subsection{TSP domain heuristic}
We can use the graph convolution network as a heuristic inside A-star search. Given a feature encoding of a partial cycle $P$, we can compute the probability $p_i$ of moving to any node $i$. We then use the quantity ${(N - v)(1-p_i)/2}$ as the heuristic, where $N$ is the total number of nodes and $v$ is the number of visited nodes in the current partial path. Multiplying by $(N-v)/2$ puts the output of the heuristic on the same scale as the current cost of the partial path.

\subsection{Deep VS Shallow Networks}
Here we present another experiment to further establish the claim that the depth of the network improves performance and not necessarily the number of parameters in the network. In Table \ref{table:deep_vs_shallow} we compare deep networks against shallow networks containing the same number of parameters. Note that we evaluate based on two different metrics. The first metric is classification error on the next action, which shows whether or not the action matches what the planner would have done. The second metrics is execution success rate, as defined above.

\begin{table}[h]
\begin{center}
\resizebox{0.45 \textwidth}{!}{   %resize table
  \begin{tabular}{ c | c | c | c c }
    Num Params & Deep-8 & Wide-2 & Wide-1\\
    \hline
    556288 & 0.068 & 0.092 & 0.129 & error rate\\ \cline{2-5}
     & 0.83 & 0.62 & 0.38 & succ rate\\
     \hline
  \end{tabular}\par
 }
  \bigskip
  \vspace{-1em}
  \caption{Comparison of deep vs. shallow networks. The deep network has 8 convolution layers with 64 filter per layer. The shallow networks contain 2 and 1 layers respectively with 256 and 512 filters per layer respectively. Clearly, deeper networks outperform shallow networks while containing an equal number of parameters.}\label{table:deep_vs_shallow}

\end{center}
\end{table}

\subsection{Evaluation of Bootstrap Performance}
\label{ss:bootstrap_exp}
We briefly summarize the evaluation of data bootstrapping in the Sokoban domain. Table \ref{table:bootstrap_compare} shows the success rate and plan length prediction error for architectures with and without the bootstrapping. As can be observed, the bootstrapping resulted in better use of the data, and led to improved results.

While investigating the performance of data bootstrapping with respect to training set size, we observed that a non-uniform sampling performed better on smaller datasets.
For each $\tau \in \Dimit$, we sampled an observation $\hat{o}$ from a distribution that is linearly increasing in time, such that observations near the goal have higher probability. The performance of this bootstrapping strategy is shown in Figure \ref{fig:small_bootstrapping}. As should be expected, performance improvement due to data augmentation is more significant for smaller data sets.

\begin{figure}[h]
\vspace{-1em}
  \centering
    \noindent
  \includegraphics[width=0.3\textwidth]{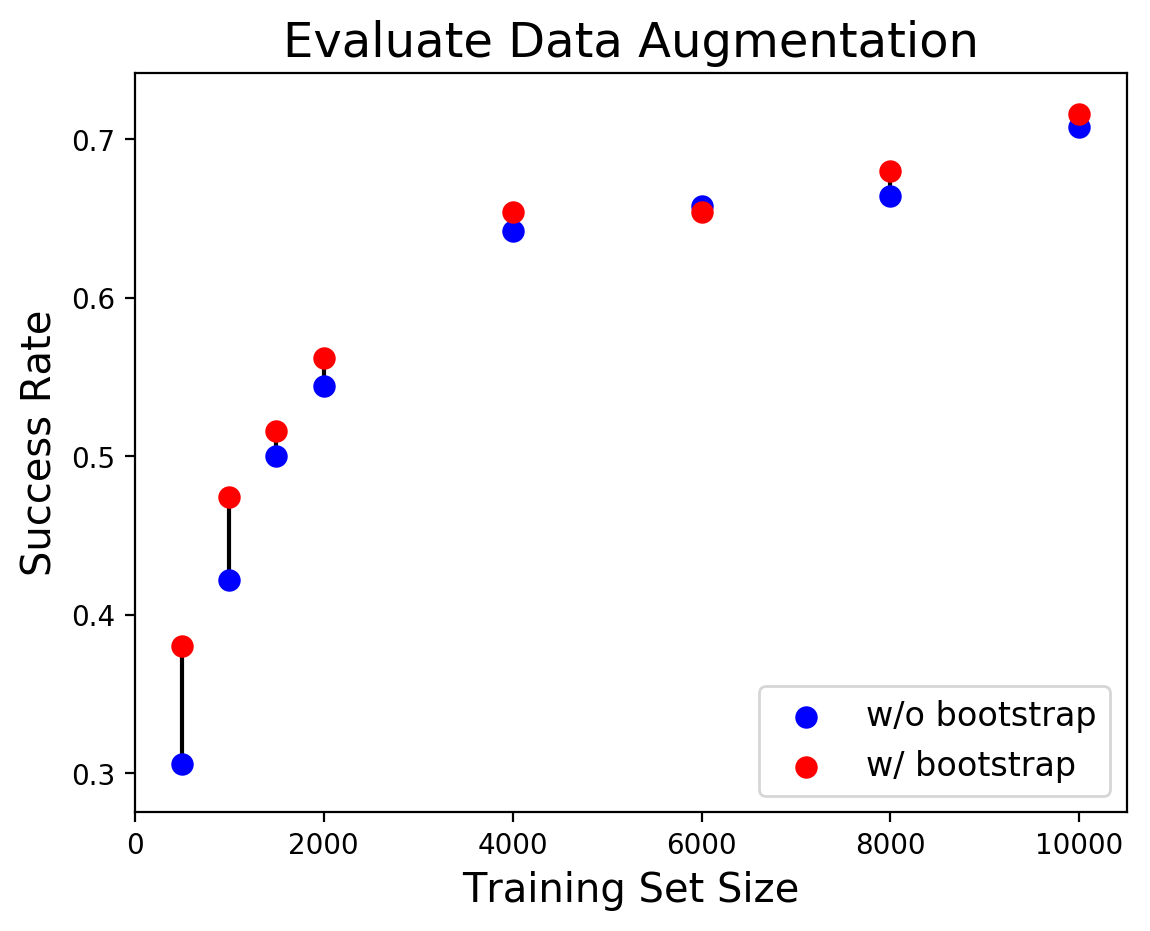}
  \caption{This shows the affect of data bootstrapping on the performance of two-object Sokoban, as a function of the dataset size. Smaller datasets benefit more from data augmentation.}
  \label{fig:small_bootstrapping}
\end{figure}

\begin{figure*}[h]
\vspace{-1em}
  \centering
    \noindent
    \begin{subfigure}[b]{0.27\linewidth}
    \includegraphics[width=\textwidth]{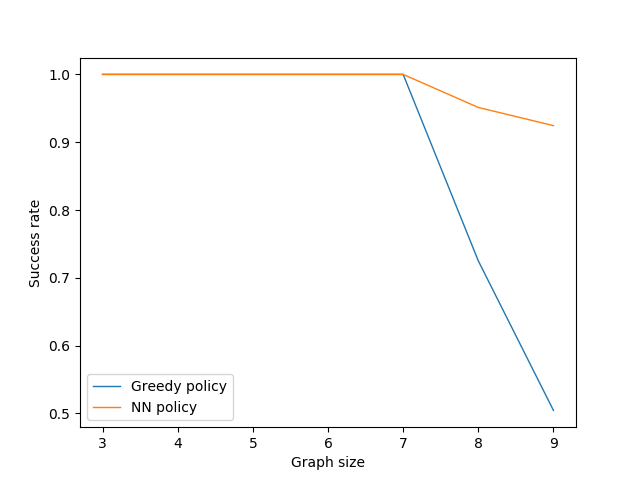}
    \caption{}
    \label{fig:tsp_results2}
  \end{subfigure}\hfill
  \begin{subfigure}[b]{0.27\linewidth}
    \includegraphics[width=\textwidth]{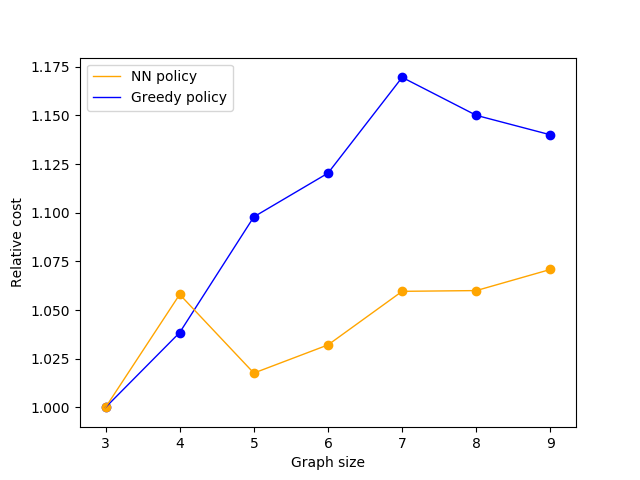}
    \caption{}
  \end{subfigure}\hfill
  \begin{subfigure}[b]{0.27\linewidth}
    \includegraphics[width=\textwidth]{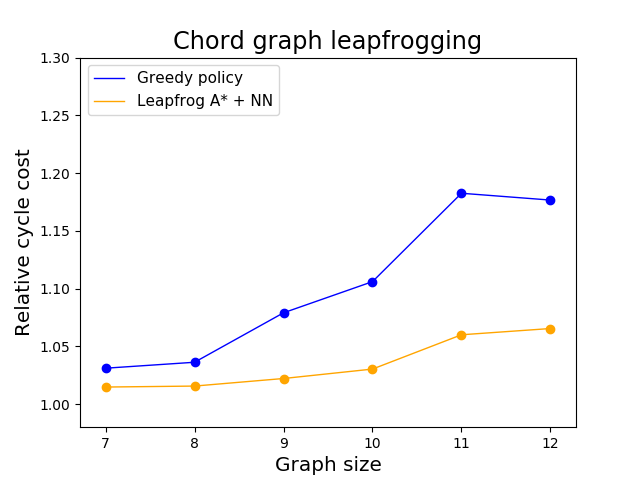}
    \caption{}
  \end{subfigure}\hfill
  \caption{Chord-graph TSP results. (a) Success rate of neural network policy on chord graphs of size $3-9$, respectively. Note that the agent is only allowed to visit each node once, so the agent may visit a node with no un-visited neighbors which is a dead end. We also show the success rate of the greedy policy. (b) Performance of neural network policy on chord graphs of size 3-9. (c) Leapfrogging algorithm results on chord graphs of size 7-12. We compare to a baseline greedy policy } 
  \label{fig:tsp_execution2}
\end{figure*}

\subsection{Analysis of Failure Modes}\label{ssec:failure_modes}

While investigating the failure modes of the learned GRP in the Sokoban domain, we noticed that there were two primary failure modes. The first failure mode is due to cycles in the policy, and is a consequence of using a deterministic policy. For example, when the agent is between two objects a deterministic policy may oscillate, moving back and fourth between the two. We found that a stochastic policy significantly reduces this type of failure. However, stochastic policies have some non-zero probability of choosing actions that lead to a dead end (e.g., pushing the box directly up against a wall), which can lead to different failures. The second failure mode was the inability of our policy to foresee long term dependencies between the two objects. An example of such a case is shown in Figure \ref{fig:failure_modes} (f-h), where deciding which object to move first requires a look-ahead of more than 20 steps. A possible explanation for this failure is that such scenarios are not frequent in the training data.
This is less a limitation of our approach and more a limitation of the neural network, more specifically the depth of the neural network.

Additionally, we investigated whether the failure cases can be related to specific features in the task. Specifically, we considered the task plan length (computed using FD), the number of walls in the domain, and the planning time with the FD planner (results are similar with other planners). Intuitively, these features are expected to correlate with the difficulty of the task.
In Figure \ref{fig:failure_modes} (a-c) we plot the success rate vs. the features described above. As expected, success rate decreases with plan length. Interestingly, however, several domains that required a long time for FD were `easy' for the learned policy, and had a high success rate. Further investigation revealed that these domains had large open areas, which are `hard' for planners to solve due to a large branching factor, but admit a simple policy. An example of one such domain is shown in Figure \ref{fig:failure_modes} (d-e). We also note that the number of walls had no visible effect on success rate -- it is the configuration of the walls that matters, and not their quantity.

\begin{figure*}[h]
  \vspace{-1em}
  \centering
  \begin{subfigure}[b]{0.3\linewidth}
    \includegraphics[width=\textwidth]{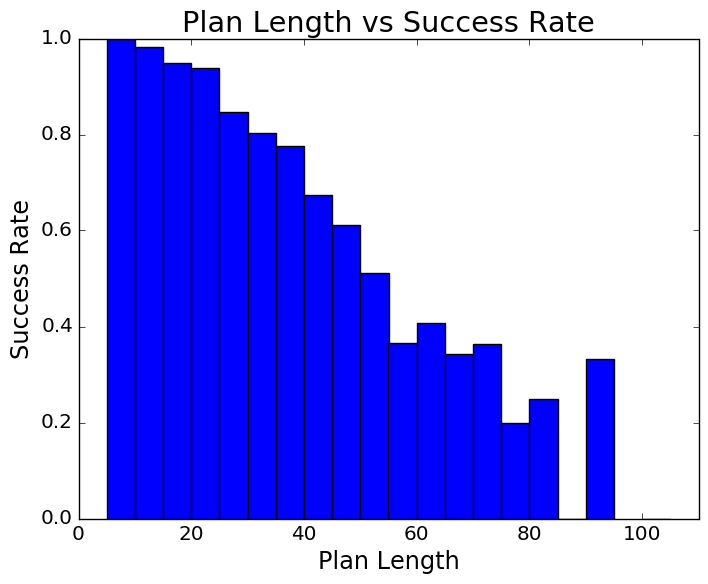}
    \caption{}
  \end{subfigure}~
  \begin{subfigure}[b]{0.3\linewidth}
    \includegraphics[width=\textwidth]{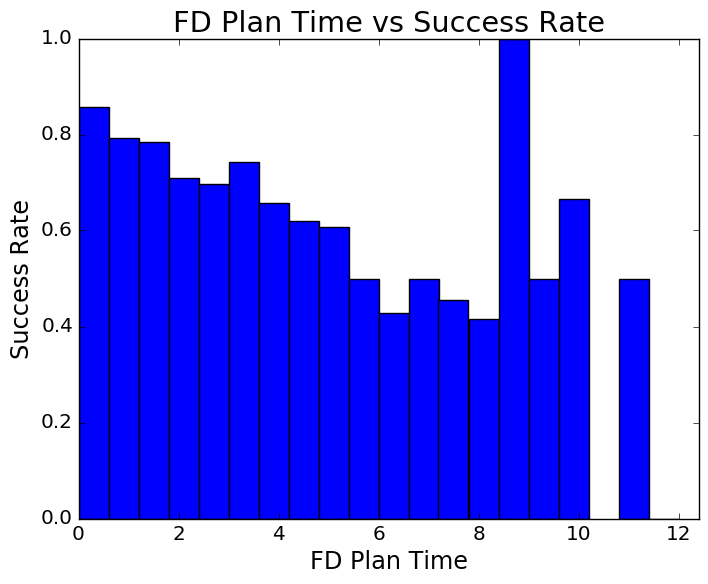}
    \caption{}
  \end{subfigure}~
  \begin{subfigure}[b]{0.3\linewidth}
    \includegraphics[width=\textwidth]{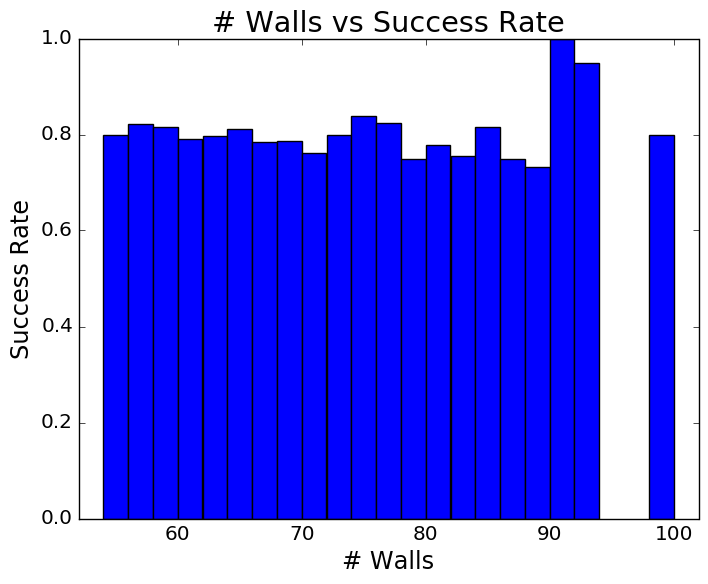}
    \caption{}
  \end{subfigure}\\
  \begin{subfigure}[b]{0.19\linewidth}
    \includegraphics[width=\textwidth]{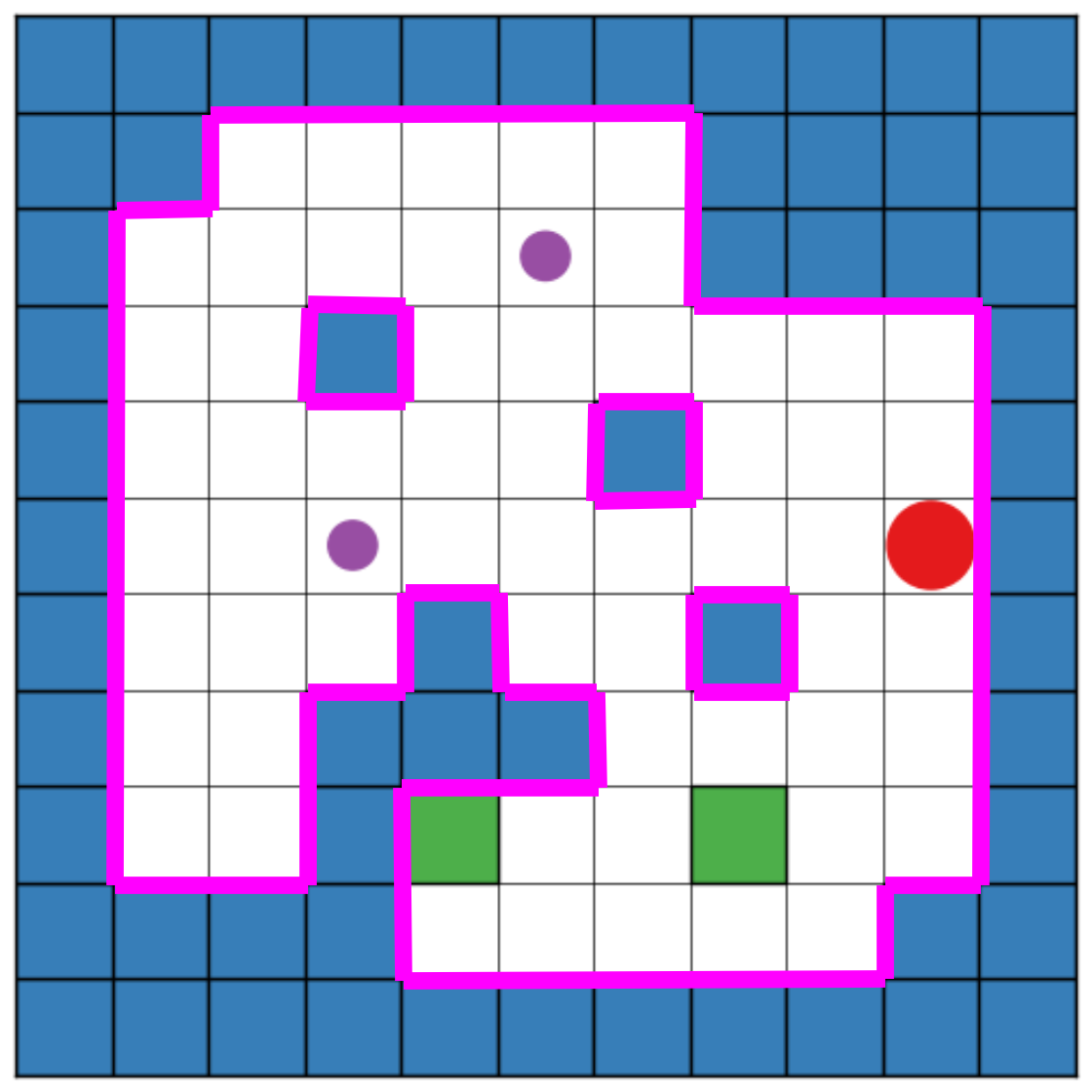}
    \caption{}
  \end{subfigure}~
  \begin{subfigure}[b]{0.19\linewidth}
    \includegraphics[width=\textwidth]{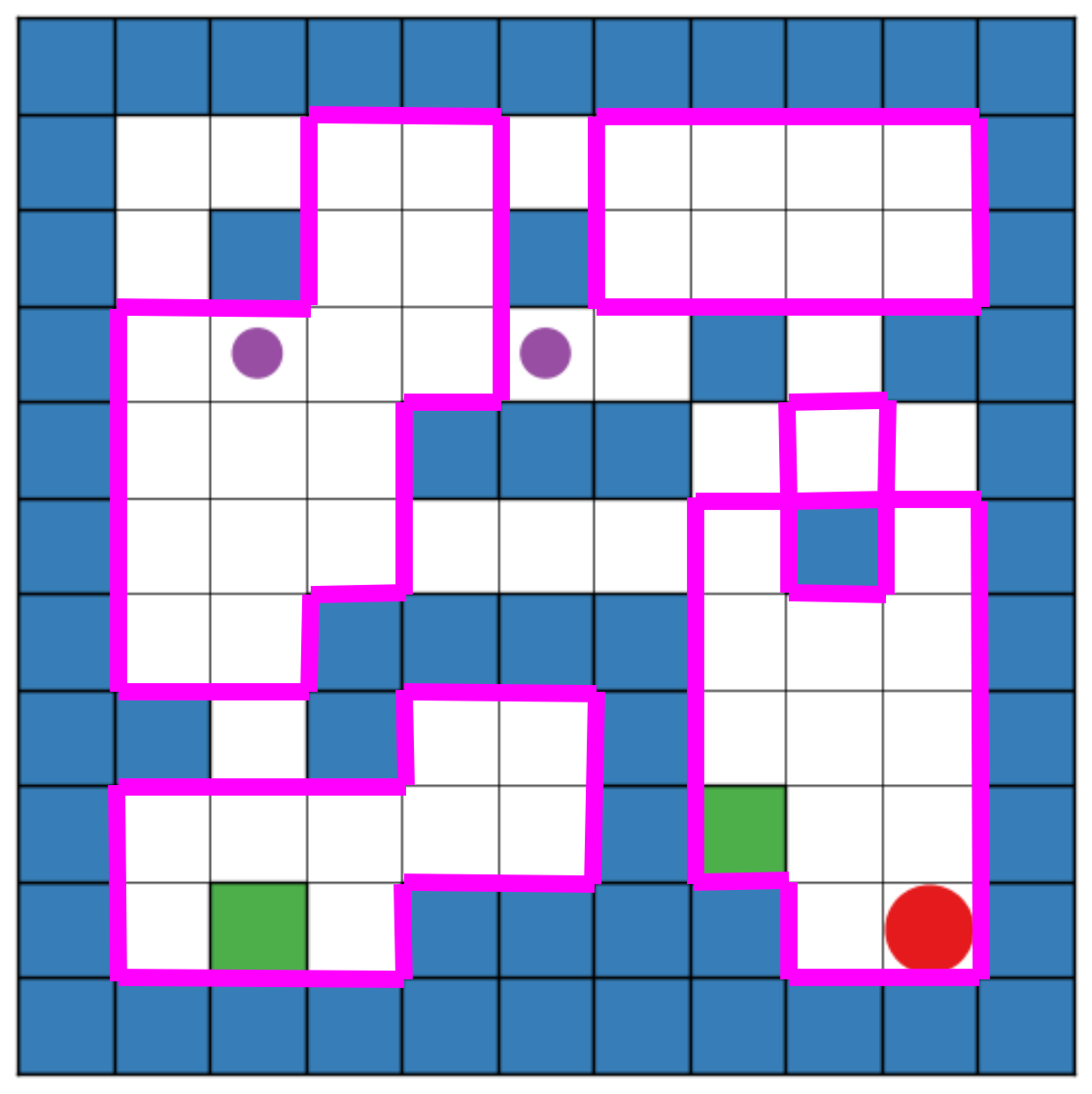}
    \caption{}
  \end{subfigure}~
  \begin{subfigure}[b]{0.19\linewidth}
    \includegraphics[width=\textwidth]{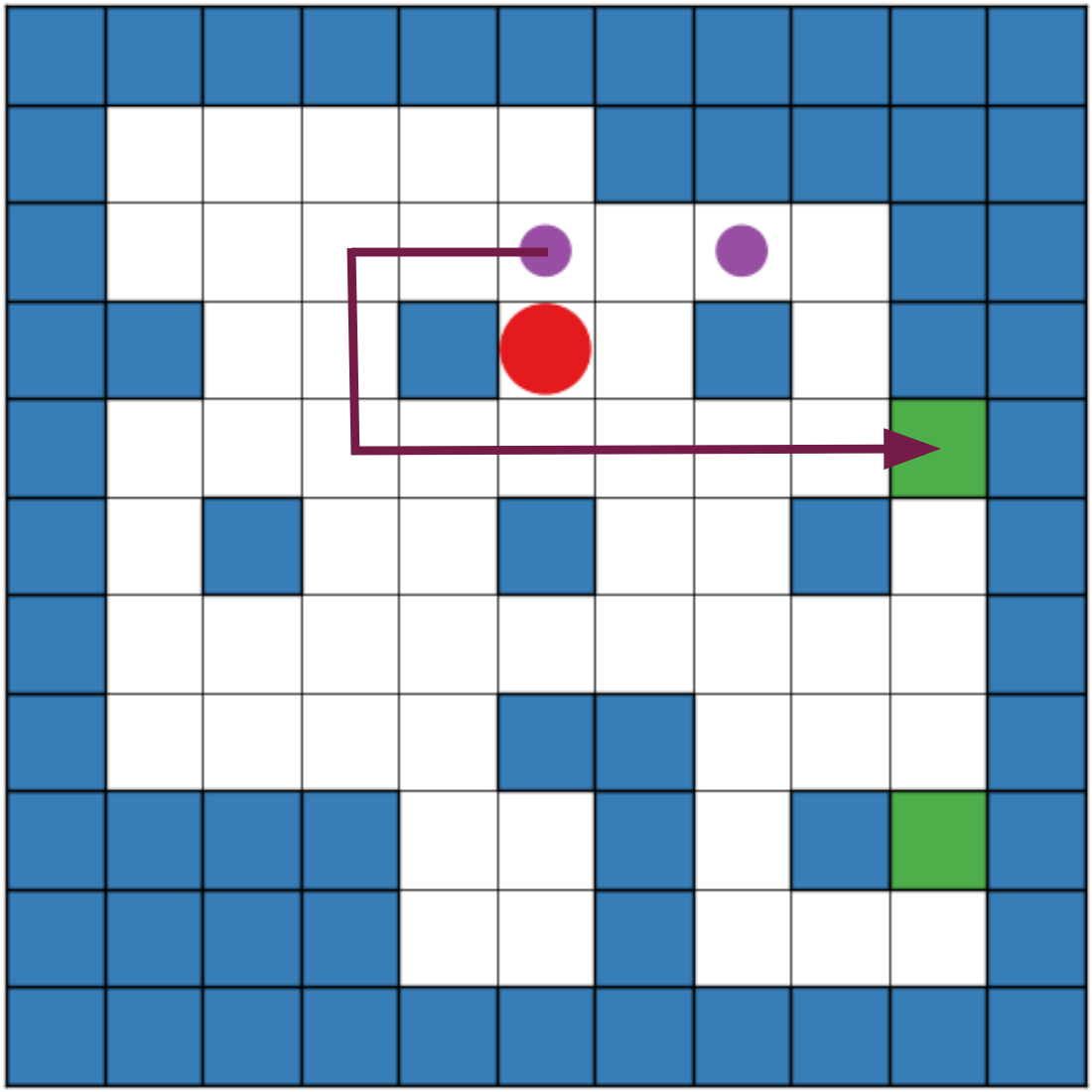}
    \caption{}
  \end{subfigure}~
  \begin{subfigure}[b]{0.19\linewidth}
    \includegraphics[width=\textwidth]{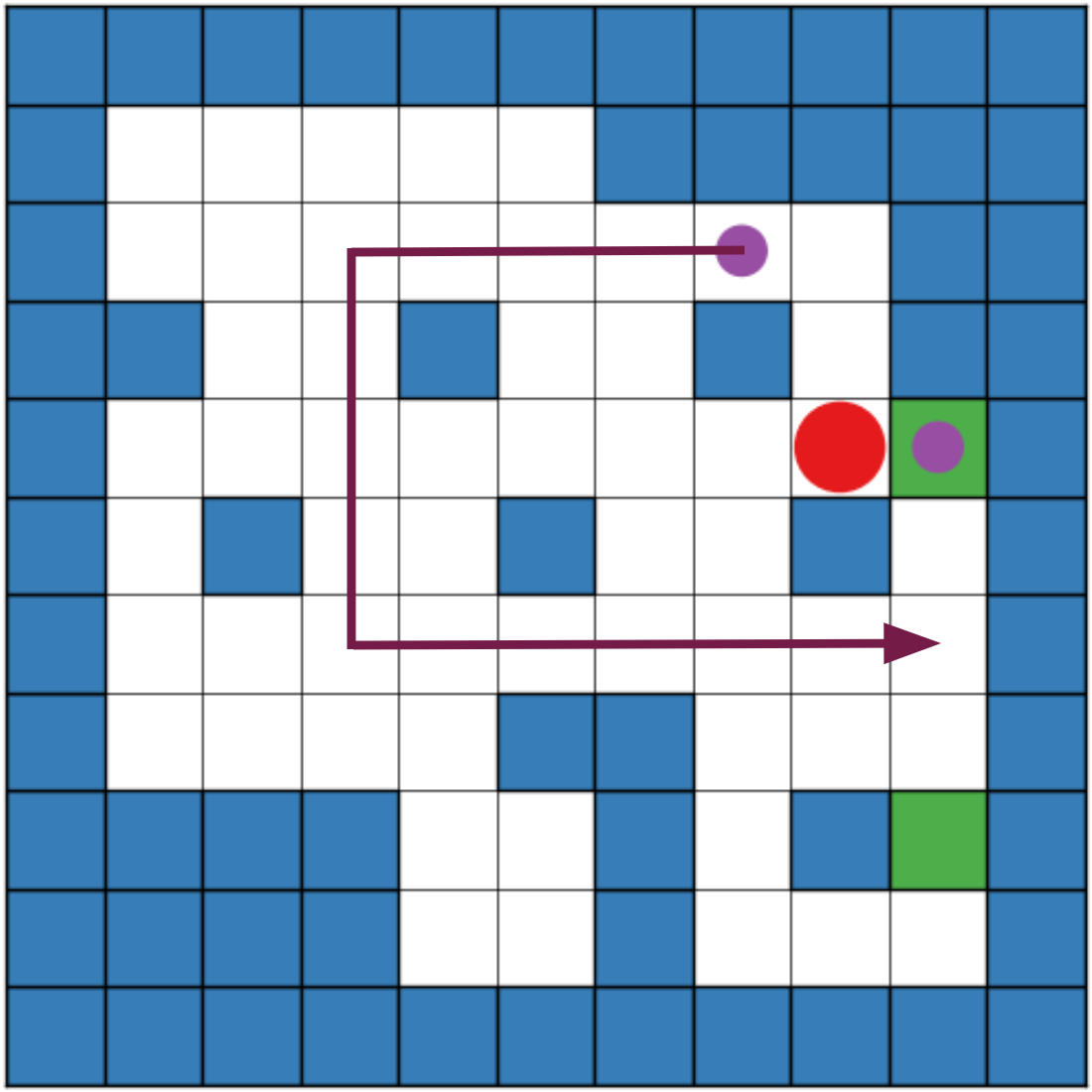}
    \caption{}
  \end{subfigure}~
  \begin{subfigure}[b]{0.19\linewidth}
    \includegraphics[width=\textwidth]{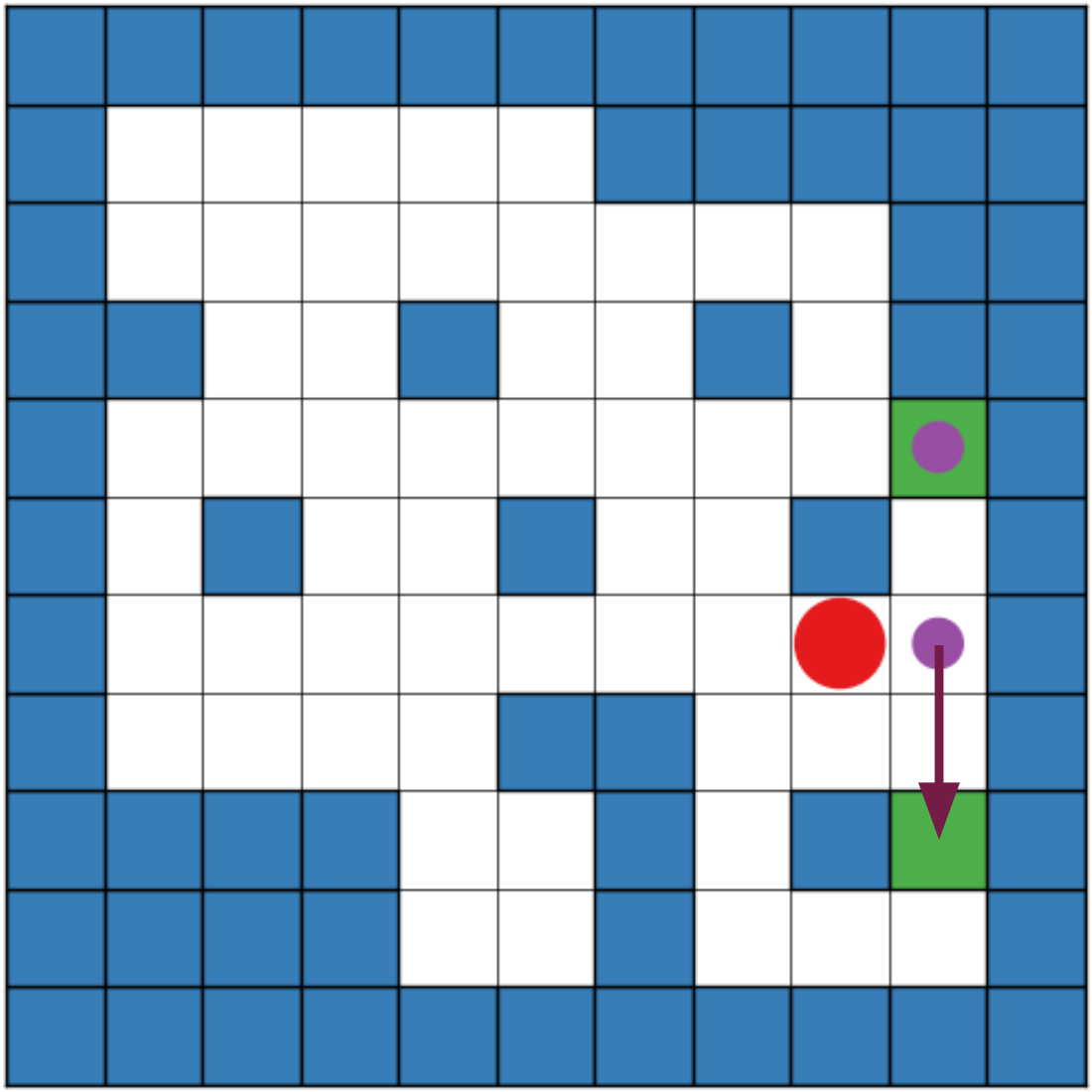}
    \caption{}
  \end{subfigure}
\caption{Analysis of Failure Modes. (a-c): Success rate vs features of the domain. Plan length (a) seems to be the main factor in determining success rate. Longer plans fail more often. While there is some relationship between planning time and success rate (b), planning time is not always an accurate indicator, as explained in (d,e). The number of walls (c) does not affect success rate. (d,e): Domains containing large open rooms results in a high branching factor and thus produce the illusion of difficulty while still having a simple underlying policy. The domain in (d) took FD significantly longer time to solve, ~8.6 seconds compared to ~1.6 seconds for the domain in (e), although it has a shorter optimal solution, 51 steps compared to 65 steps. This is since the domain in (e) can be broken up into small regions which are all connected by hallways, a configuration that reduces the branching factor and thus the overall planning speed. (f-h): Demonstration of the 2nd failure mode in Section \ref{ssec:failure_modes}. From the start state, the policy moves the first object using the path shown in (f). It proceeds to move the next object using the path in (g). As the game state approaches (h) it becomes clear that the current domain is no longer solvable. The lower object must be pushed down but is blocked by the upper object, which can no longer be moved out of the way. In order to solve this level, the first object must ether be moved to the bottom goal or must be moved after the second object has been placed at the bottom goal. Both solutions require a look-ahead consisting of 20+ steps.}
  \label{fig:failure_modes}
\end{figure*}

\subsection{Sokoban Reproducibility Details}
For all experiments we used a decaying learning rate $lr = lr_0(\frac{1}{2})^{\text{floor}(epoch/d)}$ where $lr_0=0.001$. We noticed that the decay rate was dependent on the amount of training data used. Less training data required a slower decay rate. When training with 45k trajectories we used $d=5$. The experiment for Figure \ref{fig:experiments}(a), \ref{fig:small_bootstrapping}, and for Table \ref{table:bootstrap_compare}, \ref{table:deep_vs_shallow} used a window size equivalent to the size of the world. The rest of the experiments used the window size $k=1$. The learning rate for Table \ref{table:bootstrap_compare} had $d=50$.

\subsection{TSP Reproducibility Details}
We use a decaying learning rate for all graph instances $lr = lr_0(.95)^{epoch}$ where $lr_0 = .001$. 
Our network architecture is 4-layers of the propagation rule defined in equation 2. Each layer consists of 26 neurons. We choose $\mathcal{N} = \mathcal{N}_1$, i.e. the network computes the convolution over all first-order neighbors of each node. We found that increasing this neighborhood size does not increase performance of the learned network. We include the edge weights of the graph as edge features. In our experiments, we use training sets of size 1,000 for all training and graph sizes.

\end{document}